\newcommand{\W}[0]{Wasserstein }
\newcommand{\JS}[0]{Jensen-Shannon }
\newcommand{\R}{\mathbb{R}}
\newcommand{\PP}{\mathbb{P}}
\newcommand{\QQ}{\mathbb{Q}}
\newcommand{\pr}[0]{ p_{r} }
\newcommand{\pf}[0]{ q_{\theta} }
\newcommand{\mf}[0]{ \mathcal{M}_{\theta} }
\newcommand{\mr}[0]{ \mathcal{M}_{r} }
\newtheorem{theorem}{Theorem}[section]
\newtheorem{proposition}{Proposition}[section]
\newtheorem{corollary}{Corollary}[section]
\newtheorem{definition}{Definition}[section]
\newcommand{\MCS}{\operatorname{MCS}}
\newcommand{\supp}{\operatorname{supp}}
\newcommand{\Aone}{\operatorname{\textbf{Assumption \ A}}}
\newcommand{\Atwo}{\operatorname{\textbf{Assumption \ B}}}
\newcommand{\Athree}{\operatorname{\textbf{Assumption \ C}}}
\newcommand{\Afour}{\operatorname{\textbf{Assumption \ D}}}
\newcommand{\Aonetwo}{\operatorname{ \textbf{Assumption A} \ and \ \textbf{B}}}
\newcommand{\KL}{\operatorname{KL}}
\newcommand{\JSD}{\operatorname{JSD}}
\newcommand{\Wa}{\operatorname{W}}
\newcommand{\dist}{\operatorname{dist}}
\icmltitlerunning{Implicit Manifold Learning in GANs}
\icmltitlerunning{Implicit Manifold Learning in GANs}
\begin{document} 

\twocolumn[
\icmltitle{Implicit Manifold Learning on Generative Adversarial Networks}



\icmlsetsymbol{equal}{*}

\begin{icmlauthorlist}
\icmlauthor{Kry Yik Chau Lui}{rbc}
\icmlauthor{Yanshuai Cao}{rbc}
\icmlauthor{Maxime Gazeau}{rbc}
\icmlauthor{Kelvin Shuangjian Zhang}{rbc} 
\end{icmlauthorlist}

\icmlaffiliation{rbc}{Borealis AI, Toronto, Canada}

\icmlcorrespondingauthor{Kry Yik Chau Lui}{yikchau.y.lui@rbc.com}

\icmlkeywords{boring formatting information, machine learning, ICML}

\vskip 0.3in
]



\printAffiliationsAndNotice{}  

\begin{abstract} 
This paper raises an implicit manifold learning perspective in Generative Adversarial Networks (GANs), 
by studying how the support of the learned distribution, 
modelled as a submanifold $\mf$, 
perfectly match 
with $\mr$, the support of the real data distribution.
We show that optimizing \JS divergence forces $\mf$ to perfectly match with $\mr$,  
while optimizing \W distance does not.
On the other hand, 
by comparing the gradients of the \JS divergence and the \W distances ($W_1$ and $W_2^2$) in their primal forms,
we conjecture that \W $W_2^2$ may enjoy desirable properties such as reduced mode collapse. 
It is therefore interesting to design new distances that inherit the best from both distances. 
\end{abstract} 

\section{Introduction}
\label{intro}


Unsupervised learning at present is largely about learning a probability distribution of data, either explicity or implicitly.
This is often achieved by parametrizing a probability distribution $\QQ_{\theta}$, that is close to the real data distribution $\PP_r$ in some sense.
The closeness criterion is typically an integral probability metric (e.g. \W distance) or an $f$-divergence (e.g. KL divergence).
Slightly modifying \citet{arjovsky2017towards}'s definition of \textit{perfectly aligned} ( left in figure \ref{fig:compare} ), 
we say two manifolds $\mf$ and $\mr$ are \textit{positively aligned} 
if the set $\mf \cap \mr$ has a positive measure (center in figure \ref{fig:compare}).
\footnote{Intuitively, $\mf$ and $\mr$ are the same on part of the space. }
In the context of generative modeling, 
two properties are desired for the closeness criterion.
First, it should encourage the support of $\QQ_{\theta}$, 
modelled as $\mf$, 
to positively align with $\mr$. 
This is a geometry problem,
and it may be related to sample quality (more realistic generated samples). 
Second, 
it should make $\QQ_{\theta}$ and $\PP_{r}$ probabilistically similar, 
so samples from $\QQ_{\theta}$ reflect the multi-modal nature of $\PP_{r}$.
This is a probability problem, 
and it may be related to sample diversity (less mode dropping). 
The importance of the latter is well recognized \cite{arjovsky2017wasserstein,arora2017generalization}. 
The first geometric property is desired because $\mr$ might encode important constraints satisfied by real data. 
Consider natural images for example, 
samples from a learned distribution $\QQ_{\theta}$ are likely to be sharp looking if they are on $\mf \cap \mr$. 
In practice, $\PP_{r}$ is often supported on a much lower dimensional submanifold $\mathcal{M}_r$. 
For instance, 
the space of celebrity faces is a tiny submanifold in $\mathbb{R}^{3 \times 64 \times 64}$ with potentially very complicated geometry. 
The dimensionality and geometric complexity can make the positive alignment between $\mf$ and $\mr$ very hard. 
If our goal is to generate realistic samples that respect the implicit 
constraints in real data, 
the emphasis of unsupervised learning should not only be learning the probability distribution $\PP_r$ but also the manifold $\mr$. 
In other words, there is an implicit
manifold learning problem embedded in the explicit task of generative model learning. 

Generative Adversarial Networks (GANs) \cite{goodfellow2014generative} is a popular implicit generative model that offers great flexibility on the choice of objective functions.
Extensive research \cite{nowozin2016f, arjovsky2017wasserstein, li2017mmd, bellemare2017cramer, berthelot2017began} 
has been done on GANs loss function to improve training stability and mode collapse.
This paper explores existing loss functions from a different perspective,
namely implicit manifold learning. 
We show that optimizing \W distance does not guarantee positive alignment between $\mf$ and $\mr$, 
while optimizing \JS divergence does.
Furthermore, 
we attempt to clarify geometric and probabilistic properties of the \W $W_1$, $W_2^2$ metrics and \JS divergence, 
by comparing their theoretical gradients. 
We conjecture that $W_2^2$ has richer geometric properties than $W_1$, 
leading to adaptive gradient update and reduced mode collapse.

\begin{figure}[t]
\centering
    \includegraphics[width=1.05\linewidth]{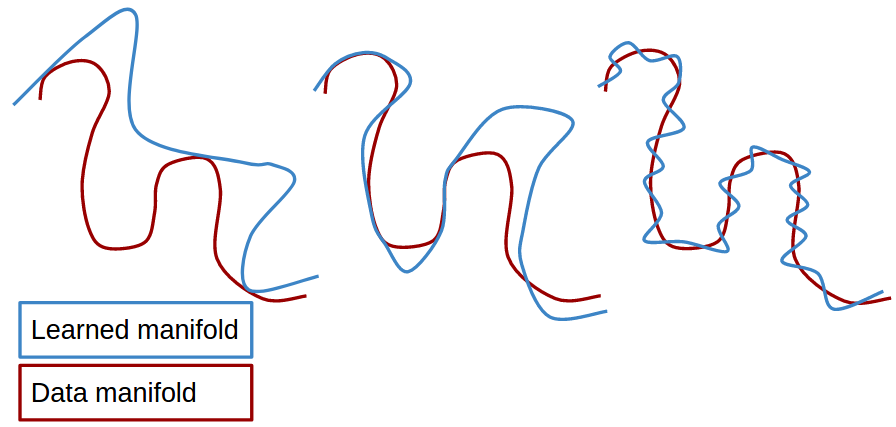}
\caption{Concepts illustrations.
Two manifolds 
\textbf{(left)} perfectly aligned at 3 points;
\textbf{(center)} positively aligned on 3 regions (\JS $\JSD < \log 2$);
\textbf{(right)} intersect transversally at many points (\W $W_p < 0.01 $).
}

\vspace{-0.15in}

\label{fig:compare}
\end{figure}


\section{Preliminaries and Definitions}
Let $\mathcal{X}$ be a compact metric space endowed with Borel $\sigma$-algebra $\Sigma$. 
For a probability measure $\mu$ on $\mathcal{X}$, 
let $\supp(\mu)$ denote its support, 
where $\supp(\mu):=\overline{\{B \in \Sigma | \mu(B)>0\}}$. 
We work with probability distributions whose supports are $k$-dimensional smooth manifolds in the ambient space $\mathbb{R}^n$.
Let $\supp(\PP) = \mathcal{M}_{\PP}^{k} $ and 
$ \supp(\QQ) = \mathcal{M}_{\QQ}^{k} $ (When $k = n$, $\mathcal{M}^{k} = \mathbb{R}^n$).
We focus on two probability distances in this paper, 
the \JS divergence (JSD):
\vspace{-0.09in}
\begin{flalign*}
	\JSD (\PP, \QQ) = \frac{1}{2}\KL(\PP || \QQ_m) + \frac{1}{2}\KL(\QQ|| \QQ_m), 
\end{flalign*}

\vspace{-0.2in} 
where $ \QQ_m = \frac{1}{2}( \PP + \QQ ) $ with $p$, $q$ and $q_m$ denoting densities of
$\PP$, $\QQ$ and $\QQ_m$;

\vspace{-0.07in} 
and Wasserstein $p$-distance $W_p$ ($1\le p < \infty$): 
\vspace{-0.1in}
\begin{flalign}
\label{eqn:Kantorovich}
	\Wa_{p}(\PP, \QQ) 
	= \bigg( \inf\limits_{\gamma \in \Pi(\PP, \QQ)} \int ||x-y||^p d\gamma(x,y)\bigg)^{\frac{1}{p}}
\end{flalign}

\vspace{-0.15in}
where $\Pi(\PP, \QQ)$ denotes the collection of all probability measures on $\mathcal{M_{\PP}}^k \times \mathcal{M_{\QQ}}^k$ with marginals $\PP$ and $\QQ$ on the first and second variables respectively.
As $\mathcal{M_{\PP}}^k $ and $ \mathcal{M_{\QQ}}^k$ have the same dimensions, 
we simplify their notations as $\mathcal{M_{\PP}} $ and $ \mathcal{M_{\QQ}}$ when contexts are clear. 
Monge \cite{monge1781memoire} originally formulated the distance as:
\footnote{ Historically, Monge forumated $W_1$ only. }
\begin{equation}
\label{eqn:Monge}
	W_{p}(\PP, \QQ) 
	= \inf_{T_{*}(\PP) = \QQ} 
	[ \mathbb{E}_{x \sim \pr}  \| x - T(x) \|^{p} ]^{1/p}
\end{equation}

\vspace{-0.15in}

where $T_{*}(\PP) = \QQ$ means a Borel map $T$ pushes forward $\PP$ to $\QQ$, i.e.
$\int_{ T^{-1}(B) } p = \int_{ B } q $
for any Borel set $B \subset \mathcal{M}^{k}$.
Note the infimum in equation \eqref{eqn:Monge} is taken over the space of Borel maps while in equation \eqref{eqn:Kantorovich} the infimum is searched over the space of probability measures.
We consider the cases whenever the infimum is achieved by an optimal transport map $T_p$. 
For example when $p = 2$, 
for each $\QQ$, 
by Brenier's theorem \cite{mccann2001polar, mccann2011five} there exists an optimal transport map $T_2$ such that
$W_{2}^2(\PP , \QQ)
= \mathbb{E}_{x \sim p} [ \| x - T_2(x; \QQ) \|^{2} ] $. 






\section{Sample Quality}
Since its introduction, sample quality in Generative Adversarial Nets (GANs) has improved dramatically \cite{goodfellow2014generative,radford2015unsupervised,berthelot2017began},
and it arguably generates the most realistic looking images nowadays. 
However, little theory exists to explain why this is the case \cite{goodfellow2016nips}. 
One reason is a precise definition of ``sharp looking'' is missing.

When $\PP_{r}$ is the distribution of natural images, 
its support $\supp(\PP_{r})$ is probably sufficiently structured that it can be modeled by a $k$-dimensional submanifold $\mr$ in the ambient space $\mathbb{R}^n$ \cite{narayanan2010sample}. 
Now pick a sample $x$ from $\mr$ and consider its perturbation, $ \widetilde{x} = x + \epsilon$,
where $\epsilon \in \mathbb{R}^n$ and $ \| \epsilon \|$ fixed. 
Depending on $\epsilon$'s direction, 
some $\widetilde{x}$ might look realistic while others may not.
When $ \| \epsilon \|$ increases, the difference becomes more vivid.
This is remarkably similar to the fact that some $\widetilde{x}$ travel along $\mathcal{T}_{x} \mathcal{M}_r $ the tangent space of $\mr$ at $x$ while others go off $\mr$. 
When it is on $\mathcal{T}_{x} \mathcal{M}_r $, $\widetilde{x}$ looks sharper.
When it goes off, $\widetilde{x}$ no longer looks natural. 
This motivates:

\begin{definition}[Realistic Samples]
	
\emph{ 
We say $\QQ$ generates realistic $\PP_{r}$ samples if $ \mathcal{M}_{q} = \supp(\QQ)$ positively aligns with $\mr = \supp(\PP_r)$. 
In other words, samples from $\QQ$ are realistic with respect to $\PP_{r}$ if they lie exactly on $\mr$. }

\end{definition}

In GANs, $\QQ_{\theta}$ is the distribution implicitly parametrized by the generator $G_{\theta}$. 
Ideally, $\QQ_{\theta}$ can generate indistinguishable samples from $\PP_{r}$ after training.
We next show optimizing JSD successfully will necessarily positively align $\mf$ and $\mr$, hence $\QQ_{\theta}$ can generate at least some realistic samples.
This is intuitive, since whenever $\mr$ and $\mf$ do not positively align, 
JSD is maxed out.
We assume the following to translate our intuitions to theorems:

	$\Aone$: $\PP_{r}$ and $\QQ$ are compactly supported on $\mathcal{M}_{r}^k$ and $\mathcal{M}_{q}^k$,
	$k < n$,
	satisfying $\mathcal{L}^k(\mathcal{M}_{r}^k), \mathcal{L}^k(\mathcal{M}_{q}^k)>0$. \footnote{ $\mathcal{L}^k$ denotes Lebesgue measure on $\mathbb{R}^k $. Strictly speaking, $\mathcal{L}^k$ should be replaced by Hausdorff measure $\mathcal{H}^k$. When $k = 2$, $\mathcal{H}^2$ is the measure theoretic surface area. }
 

	$\Atwo$: $\PP_{r}$ and $\QQ$ are absolutely continuous with respect to $\mathcal{L}^k( \mathcal{M}_{r}^k )$ and $\mathcal{L}^k( \mathcal{M}_{q}^k )$, 
	i.e., 
	for any set $B\subset \R^n$, $\PP_{r}(B) = \QQ(B) =0 $ 
	whenever $\mathcal{L}^k(B)=0$. 


\begin{definition}[Minimal common support]
\label{def:MCS}	
\emph{ Under $\Aonetwo$, 
	let $ 0 \leq \alpha \leq  \log 2 $ be given. 
	Consider the set of distributions 
	$\QQ$ that achieve at most $\alpha$ level JSD: 
	$ \Omega^{\alpha} = \{ \QQ: \JSD(\PP_{r}, \QQ) \leq \alpha \} $.}  
	\emph{For any fixed $\PP_{r}$, 
	we define the \textbf{minimal common support} to achieve at most $\alpha$ level JSD to be: 
	$ \MCS^{\alpha}(\PP_{r}) 
	= \inf_{\QQ \in \Omega^{\alpha} } \mathcal{L}^k(\supp(\PP_{r}) \cap \supp(\QQ)  ). $}
\end{definition}


When $\QQ$ is implicitly parametrized by neural networks with parameters $\theta$,
the notations $\Omega^{\alpha}_{\theta}$ and $\MCS^{\alpha}_{\theta}(\PP_{r}) $ 
reflect their dependency on $\theta$.
Definition \ref{def:MCS} captures the worst case scenario: 
when JSD $< \log 2$, 
is $\MCS^{\alpha}(\PP_{r}) > 0$?
In other words, whenever JSD is not maxed out,
can we expect $\QQ$ to generate some $\PP_{r}$ realistic samples with nonzero  probability? 
The next proposition gives a positive answer.


\begin{theorem}[] 
Let $\Aonetwo$ hold and $p_r$, the density of $\PP_{r}$, be bounded, 
then for $\alpha \in [0, \log 2)$, $\MCS^{\alpha} >0$; 
when $\alpha = \log 2$,  $\MCS^{\alpha} = 0$.

\label{thm:well_defined}

\end{theorem}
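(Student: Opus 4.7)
The plan is to upper-bound $\log 2 - \JSD(\PP_r, \QQ)$ by a continuous function of $c := \mathcal{L}^k(\supp \PP_r \cap \supp \QQ)$ that vanishes as $c \to 0$, so that $\JSD \leq \alpha < \log 2$ forces $c$ away from $0$; taking the infimum over $\QQ \in \Omega^{\alpha}$ then yields $\MCS^{\alpha} > 0$. Work with the common dominating measure $\mu := \mathcal{H}^k|_{\mr \cup \mathcal{M}_q}$ and densities $p$, $q$ of $\PP_r$, $\QQ$ supported on $\mr$ and $\mathcal{M}_q := \supp(\QQ)$. With the conventions $0\log 0 = 0$, the integrand in $\JSD$ vanishes wherever $p$ or $q$ does, so
\begin{equation*}
\log 2 - \JSD(\PP_r, \QQ) \;=\; \frac{1}{2}\int_C \left[ p\log\frac{p+q}{p} + q\log\frac{p+q}{q} \right] d\mu,
\end{equation*}
where $C := \mr \cap \mathcal{M}_q$.

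The core estimate is Jensen's inequality for the concave function $\log$, applied against the probability measures $(p/m)\,d\mu|_C$ and $(q/n)\,d\mu|_C$ with $m := \int_C p\,d\mu$ and $n := \int_C q\,d\mu$:
\begin{equation*}
\int_C p\log\frac{p+q}{p}\,d\mu \;\leq\; m\log\frac{m+n}{m}, \qquad \int_C q\log\frac{p+q}{q}\,d\mu \;\leq\; n\log\frac{m+n}{n}.
\end{equation*}
Summing gives $\log 2 - \JSD(\PP_r, \QQ) \leq \tfrac{1}{2} f(m,n)$ where $f(m,n) := (m+n)\log(m+n) - m\log m - n\log n$; the identity $\partial_m f = \log(1 + n/m) > 0$ shows $f$ is non-negative and strictly increasing in each argument. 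The boundedness hypothesis $\|\pr\|_\infty \leq M$ now enters decisively: $m \leq Mc$ and $n \leq 1$, so by monotonicity
\begin{equation*}
\log 2 - \JSD(\PP_r, \QQ) \;\leq\; \tfrac{1}{2}\, f(Mc, 1) \;=:\; \varphi(c),
\end{equation*}
with $\varphi$ continuous, strictly increasing on $[0, \infty)$, and $\varphi(0^+) = 0$ (because $x\log x \to 0$). Hence $\JSD \leq \alpha < \log 2$ forces $c \geq \varphi^{-1}(\log 2 - \alpha) > 0$, and taking the infimum over $\QQ \in \Omega^{\alpha}$ yields $\MCS^{\alpha} \geq \varphi^{-1}(\log 2 - \alpha) > 0$.

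The boundary case $\alpha = \log 2$ is immediate by exhibiting one $\QQ$ satisfying Assumptions A and B whose support is a $k$-dimensional submanifold of $\R^n$ disjoint from $\mr$---for instance a sufficiently far translate of $\mr$ carrying any absolutely continuous probability density; then $\JSD(\PP_r, \QQ) = \log 2 \leq \alpha$ and $c = 0$, so $\MCS^{\log 2} = 0$. The main obstacle is the Jensen reduction of the two function-valued integrals to the scalar $f(m,n)$; once that collapse is in hand, the $L^\infty$ bound on $\pr$ is exactly what forces $m \to 0$ as $c \to 0$. Without it, $p$ could in principle concentrate on $C$ so as to keep $m$ away from zero even while $\mathcal{L}^k(C)$ shrinks, and the quantitative estimate would break down.
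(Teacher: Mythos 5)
Your proof is correct, and it takes a genuinely different route from the paper's. The paper argues by contradiction: it takes a minimizing sequence $\QQ_{\theta}^m$ with vanishing overlap, assumes (w.l.o.g. in their phrasing) $\supp(\QQ_{\theta}^m)\subset\supp(\PP_r)$, splits $2\JSD$ into five terms over $S_m$, a sublevel set $S_m^J$ of the density, and $S_m^c$, lower-bounds each piece using the bound $p_r\le N$ and Jensen's inequality, and then uses a diagonal extraction to force $\JSD(\PP_r,\QQ_{\theta}^{m_i})\to\log 2$, contradicting $\alpha_0<\log 2$. You instead work directly with the identity $\log 2-\JSD(\PP_r,\QQ)=\tfrac12\int_C\bigl[p\log\tfrac{p+q}{p}+q\log\tfrac{p+q}{q}\bigr]d\mu$ over the overlap $C$, collapse both integrals by Jensen into the scalar quantity $f(m,n)$ with $m=\PP_r(C)$, $n=\QQ(C)$, and use $m\le Mc$, $n\le1$ to get the uniform bound $\log 2-\JSD\le\varphi(c)$ with $\varphi(0^+)=0$. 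This buys an explicit quantitative conclusion, $\MCS^{\alpha}\ge\varphi^{-1}(\log 2-\alpha)$, valid for every $\QQ\in\Omega^{\alpha}$ simultaneously, and it avoids the minimizing sequence, the support-inclusion reduction, and the subsequence extraction of the paper's argument; the boundedness of $p_r$ enters in both proofs in the same essential way (to convert small overlap measure into small $\PP_r$-mass of the overlap). Minor cosmetic points: the Jensen step should be read with the convention that it is vacuous when $m=0$ or $n=0$ (the corresponding integral is then zero), and only non-strict monotonicity of $f$ is needed. The $\alpha=\log 2$ case is handled the same way in both proofs, by exhibiting a $\QQ$ with support disjoint from $\mr$.
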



Theorem \ref{thm:well_defined} ensures $\MCS^{\alpha}(\PP_{r})$ is well-defined. 
The next corollary suggests JSD is a sensible objective to optimize when it comes to generating realistic samples.


\begin{corollary}[]
	Under the assumptions in Proposition \ref{thm:well_defined}, 
	$\MCS^{\alpha}(\PP_{r})$ is non-increasing 
	with respect to $\alpha$ on the interval $[0,\log 2)$.

\label{cor:JS_sharp}

\end{corollary}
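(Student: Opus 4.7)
The plan is to observe that this corollary is essentially a statement about monotonicity of an infimum over a nested family of sets, so almost nothing beyond unpacking the definition is needed. First I would show set inclusion: for any $0 \le \alpha_1 \le \alpha_2 < \log 2$, the feasible set $\Omega^{\alpha_1} = \{\QQ : \JSD(\PP_r, \QQ) \le \alpha_1\}$ is contained in $\Omega^{\alpha_2}$, since a distribution satisfying $\JSD(\PP_r, \QQ) \le \alpha_1$ trivially satisfies $\JSD(\PP_r, \QQ) \le \alpha_2$.

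Second, I would invoke the elementary fact that an infimum taken over a larger set is no larger than an infimum over a subset. Applied to the functional $\QQ \mapsto \mathcal{L}^k\bigl(\supp(\PP_r) \cap \supp(\QQ)\bigr)$, this yields $\MCS^{\alpha_2}(\PP_r) \le \MCS^{\alpha_1}(\PP_r)$, which is exactly the non-increasing property asserted. Third, I would appeal to Theorem \ref{thm:well_defined} to note that on the interval $[0, \log 2)$ both quantities being compared are strictly positive, so the comparison is between genuinely well-defined positive numbers rather than degenerating to the trivial $0 \le 0$ that one encounters at the endpoint $\alpha = \log 2$.

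The main obstacle here is essentially nonexistent; this is a one-line consequence of the definition of $\MCS^{\alpha}$ and the monotonicity of infima over nested sets. The substantive content lives in Theorem \ref{thm:well_defined}, which guarantees positivity of the MCS on $[0, \log 2)$ and thereby gives the corollary nontrivial meaning. The only place one must be careful is excluding $\alpha = \log 2$ from the interval, so as to stay in the regime where JSD being below the maximum forces a genuinely positive common support and the corollary carries geometric significance rather than being a vacuous inequality.
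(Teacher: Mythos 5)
Your argument is correct and is essentially identical to the paper's own proof: both rest on the nesting $\Omega^{\alpha_1} \subset \Omega^{\alpha_2}$ for $\alpha_1 \le \alpha_2$ and the monotonicity of the infimum over nested sets, yielding $\MCS^{\alpha_2}(\PP_r) \le \MCS^{\alpha_1}(\PP_r)$. The additional remark invoking Theorem \ref{thm:well_defined} for strict positivity on $[0,\log 2)$ is not needed for the monotonicity claim itself, but it correctly explains why the statement is non-vacuous there.
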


The next theorem states optimizing \W distances does not force positive alignment.
In other words, 
there is no guarantee that $\mr$ and $\mathcal{M}_{q} $ positively align unless $W_p(\PP_r, \QQ) = 0$. 
This is because we can find many distributions $\QQ$ such that
$W_p(\PP_r, \QQ) < \epsilon$ but $\mr$ and $\mathcal{M}_{q}$ do not positively align, 
however small $\epsilon > 0 $ gets. 
For pictorial illustrations and comparison of theorems \ref{cor:JS_sharp} and \ref{thm:WS_global}, 
see (center) and (right) in figure \ref{fig:compare}. 


\begin{theorem}

Let $\epsilon > 0$ and $\PP_r$ be a fixed distribution.
Let $ \Gamma 
= \{ \QQ: W_p(\PP_r, \QQ) < \epsilon \}$,  
and consider the decomposition:
$\Gamma = \Gamma_1 \cup \Gamma_2 $, where
$ \Gamma_1 
= \{ \QQ: W_p(\PP_r, \QQ) < \epsilon; 
\mathcal{L}^k (\supp( \PP_r ) \cap \supp( \QQ ) ) > 0 \} $ and 
$ \Gamma_2  = \Gamma - \Gamma_1 $.
Then under $\Aone$,
$\Gamma_2$ is dense in $\Gamma$.

\label{thm:WS_global}

\end{theorem}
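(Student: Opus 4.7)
My strategy is to approximate any $\QQ \in \Gamma$ by the push-forward $\QQ_v := (T_v)_*\QQ$ of $\QQ$ under a small translation $T_v(x) = x + v$. The two ingredients are (a) translations change $W_p$ by at most the translation length, and (b) for Lebesgue-almost-every small $v$ the translated support $\supp(\QQ) + v$ meets $\supp(\PP_r)$ in an $\mathcal{L}^k$-null set. Ingredient (a) is standard; ingredient (b) is the crux, and it rests on the codimension hypothesis $k < n$ from $\Aone$.

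Fix $\QQ \in \Gamma$ and any $\eta > 0$. If $\QQ \in \Gamma_2$ take $\QQ_v = \QQ$, so assume $\QQ \in \Gamma_1$. Since $\Gamma$ is defined by a strict inequality, $\eta_0 := \min\{\eta,\, \epsilon - W_p(\PP_r, \QQ)\} > 0$. For any $v \in \R^n$ the coupling $(\mathrm{id}, T_v)_*\QQ$ gives $W_p(\QQ, \QQ_v) \le \|v\|$, hence $\|v\| < \eta_0$ yields both $W_p(\QQ, \QQ_v) < \eta$ and, by the triangle inequality, $W_p(\PP_r, \QQ_v) < \epsilon$. The only remaining requirement is $\mathcal{L}^k(\supp(\PP_r) \cap \supp(\QQ_v)) = 0$.

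To find such a $v$, I rewrite the intersection as $\{y \in \supp(\PP_r) : y - v \in \supp(\QQ)\}$ and apply Tonelli to the jointly Borel indicator $(v, y) \mapsto \mathbf{1}[y - v \in \supp(\QQ)]$:
\[
\int_{B_{\eta_0}(0)} \mathcal{L}^k\bigl((\supp(\QQ) + v) \cap \supp(\PP_r)\bigr)\, dv
= \int_{\supp(\PP_r)} \mathcal{L}^n\bigl((y - \supp(\QQ)) \cap B_{\eta_0}(0)\bigr)\, d\mathcal{L}^k(y).
\]
For each fixed $y$ the inner integrand vanishes, because $y - \supp(\QQ)$ is a translate of a smooth $k$-submanifold of $\R^n$ with $k < n$ and so has $n$-dimensional Lebesgue measure zero. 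Hence the left-hand side is zero, which means $\mathcal{L}^k(\supp(\PP_r) \cap (\supp(\QQ) + v)) = 0$ for $\mathcal{L}^n$-almost every $v \in B_{\eta_0}(0)$; any such $v$ delivers the desired $\QQ_v \in \Gamma_2$ with $W_p(\QQ, \QQ_v) < \eta$, proving density.

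The main obstacle I anticipate is making the Fubini exchange fully rigorous. One must consistently interpret $\mathcal{L}^k$ restricted to each submanifold as the $k$-dimensional Hausdorff measure $\mathcal{H}^k$ (as the paper's own footnote under $\Aone$ already notes), verify joint Borel measurability of $(v, y) \mapsto \mathbf{1}[y - v \in \supp(\QQ)]$, and apply Tonelli against $\mathcal{L}^n \otimes \mathcal{H}^k$. All of these are standard for smooth compact submanifolds but deserve careful statement in a formal write-up. Everything beyond that is elementary, and the manifold hypotheses enter only through the $\mathcal{L}^n$-nullity of $\supp(\QQ)$.
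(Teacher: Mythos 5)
Your proof is correct, and while it shares the paper's overall strategy (perturb any $\QQ\in\Gamma_1$ by a small translation, using the coupling bound $W_p(\QQ,(T_v)_*\QQ)\le\|v\|$ plus the triangle inequality to stay inside $\Gamma$), the key step is established by a genuinely different argument. The paper invokes the general position lemma from differential topology: for almost every translation $t$, the smooth submanifold $\supp(\QQ)+t$ meets $\supp(\PP_r)$ transversally, and since both are $k$-dimensional with $k<n$ the transversal intersection has dimension $2k-n<k$ (or is empty), hence $\mathcal{H}^k$-measure zero. You instead average: by Tonelli, $\int_{B_{\eta_0}(0)}\mathcal{H}^k\bigl((\supp(\QQ)+v)\cap\supp(\PP_r)\bigr)\,dv=\int_{\supp(\PP_r)}\mathcal{L}^n\bigl((y-\supp(\QQ))\cap B_{\eta_0}(0)\bigr)\,d\mathcal{H}^k(y)=0$ because a $k$-submanifold with $k<n$ is $\mathcal{L}^n$-null, so the overlap is null for almost every admissible $v$. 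Your route is more elementary and more general---it needs only joint Borel measurability of the indicator (automatic since supports are closed), $\sigma$-finiteness of $\mathcal{H}^k$ on the compact support, and $\mathcal{L}^n$-nullity of $\supp(\QQ)$, with no smoothness or transversality theory---and it also makes explicit the Wasserstein perturbation estimate that the paper's proof only sketches. What the transversality argument buys in exchange is finer geometric information: the perturbed intersection is not merely null but a submanifold of dimension $2k-n$ (empty when $2k<n$), which is stronger than what the averaging argument yields, though that extra strength is not needed for the stated theorem.
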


As a result, 
the problem that $W_p$ GANs do not necessarily generate realistic samples 
cannot be solved by increasing model capacity.

\section{Sample Diversity and Adaptive Gradient} 


Under finite capacity, 
\cite{arora2017generalization} shows there are mode collapse scenarios that few current training objectives in GANs can prevent. 
In the follow-up empirical analysis, 
\cite{arora2017empirical} raises the open problem on redesigning GANs objective so as to avoid mode collapses. 
A less ambitious quest is to compare the existing loss functions and identify properties related to mode dropping.
Hopefully this suggests new designs that combat mode collapses. 
A natural place to start the comparison is with the gradients of the generator loss functions. 


\subsection{The Wasserstein $W_1$ and $W_2$ distance}
\label{sec:W}
There are empirical evidences showing that \W $W_1$ GANs \cite{arjovsky2017wasserstein} exhibit less mode collapse than \JS GANs.
This is probably due to its geometric properties. 
We attempt to examine this by computing $\nabla_{\theta} W_1( \PP_r, \QQ_{\theta} ) $ in its primal form.  
If the geometric properties of $W_1$ makes it more robust to mode dropping,
then it is also interesting to investigate $W_2$ which better reflects geometric features \cite{villani2008optimal}.
While it is unclear how to apply $W_2$ to GANs training due to its more complex dual formulation, 
it is instructive to analyze its theoretical gradient $\nabla_{\theta} W_2^2( \PP_r, \QQ_{\theta} ) $. 
\vspace{-0.02in}




\begin{proposition}
\label{prp:WS_2}
Let $\PP_r$ and $\QQ_{\theta}$ be two distributions with absolutely continuous densities on $\mr^k$ and $\mf^k$ in the ambient space $\mathbb{R}^n$, with $k \leq n$. 
We have: 
\vspace{-0.1in}
\begin{equation}
	\label{eqn:grad_WS2}
	\nabla_{\theta} W_2^2 ( \PP_{r}, \QQ_{\theta} )
	=  -2\int \left( x - T_2(x; \theta)\right)   \nabla_{\theta} T_2(x; \theta) p_r(x) dx 
\end{equation}
\vspace{-0.07in}
Similarly, we have the following for $W_1 ( \PP_{r}, \QQ_{\theta} )$: 
\begin{equation}
	\label{eqn:grad_WS1}
	\nabla_{\theta} W_1( \PP_{r}, \QQ_{\theta} ) 
	= \int \pm \textbf{1} \nabla_{\theta} T_1(x; \theta) p_r(x) dx
\end{equation}

\vspace{-0.19in}

whenever both sides are well defined. 
$\pm \textbf{1}$ is a vector valued functions with codomain $[ \pm 1, ..., \pm 1]$ 
where the sign depends on whether $(x - T_1(x; \theta))_i$ is positive or negative, 
for $1 \leq i \leq n $.

\end{proposition}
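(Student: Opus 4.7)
The plan is to work directly from the Monge formulation \eqref{eqn:Monge}. Under the stated absolute-continuity hypotheses, the infimum is attained by an optimal transport map $T_p(\cdot;\theta)$ satisfying $T_p(\cdot;\theta)_{*}\PP_r = \QQ_\theta$ (for $p=2$ this is Brenier's theorem, already cited in the preliminaries), so
\[
W_p^p(\PP_r,\QQ_\theta) = \int \|x - T_p(x;\theta)\|^p\, p_r(x)\, dx.
\]
Differentiating both sides in $\theta$ and moving $\nabla_\theta$ inside the integral via Leibniz's rule is the backbone of the argument; everything else is a chain-rule computation.

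For $p=2$, the chain rule applied to the squared Euclidean norm gives $\nabla_\theta \|x - T_2(x;\theta)\|^2 = -2\bigl(x - T_2(x;\theta)\bigr)\,\nabla_\theta T_2(x;\theta)$, where $\nabla_\theta T_2$ denotes the Jacobian of $T_2(x;\cdot)$ in $\theta$; integrating against $p_r(x)\,dx$ produces \eqref{eqn:grad_WS2}. For $p=1$, the coordinatewise $\pm 1$ notation in \eqref{eqn:grad_WS1} indicates that the underlying norm is taken as $\|\cdot\|_1 = \sum_i |\cdot_i|$, whose coordinatewise derivative is $\partial_{y_i} |x_i - y_i| = -\operatorname{sign}(x_i - y_i)$. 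Thus
\[
\nabla_\theta \|x - T_1(x;\theta)\|_1 = (\pm\mathbf{1})\,\nabla_\theta T_1(x;\theta),
\]
where the $i$-th entry of $\pm\mathbf{1}$ is fixed by the sign of $(x - T_1(x;\theta))_i$, and integration produces \eqref{eqn:grad_WS1}. On the null set where a coordinate of $x - T_1(x;\theta)$ vanishes, any measurable subgradient selection suffices.

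The main obstacle is justifying the interchange of $\nabla_\theta$ and the integral. This requires that $\theta \mapsto T_p(x;\theta)$ be differentiable for $p_r$-a.e.\ $x$ and that $\nabla_\theta T_p(x;\theta)$ admit an integrable local dominating function in a neighborhood of $\theta$. Note that an envelope-theorem shortcut is unavailable here because the constraint $T_{*}\PP_r = \QQ_\theta$ in \eqref{eqn:Monge} itself moves with $\theta$, so the $\theta$-dependence of the minimizer really does propagate into the gradient. For $p=2$, differentiability of the Brenier map with respect to parameters of the target distribution can be extracted from Caffarelli-type regularity of the associated convex potential once $p_r$ and $q_\theta$ are smooth with sufficiently well-behaved supports. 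For $p=1$, the transport map is generally non-unique, so one additionally has to commit to a measurable selection. Both technicalities are exactly what the ``whenever both sides are well defined'' qualifier in the statement absorbs; once granted, the proof reduces to the one-line chain-rule calculations above.
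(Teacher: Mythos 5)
The paper's supplement contains no proof of this proposition at all---it rests on the Brenier/Monge representation $W_p^p(\PP_r,\QQ_\theta)=\int \|x-T_p(x;\theta)\|^p\,p_r(x)\,dx$ set up in the preliminaries---and your argument is exactly the intended computation: differentiate that representation under the integral and apply the chain rule, with no envelope-type cancellation since the push-forward constraint moves with $\theta$. Your proof is correct at the paper's level of rigor, with the regularity caveats you flag (differentiability of $\theta\mapsto T_p(x;\theta)$, an integrable dominating function, a measurable selection for $p=1$) being precisely what the clause ``whenever both sides are well defined'' absorbs. Your remark that the coordinatewise $\pm\mathbf{1}$ in \eqref{eqn:grad_WS1} presupposes the $\ell^1$ cost (a Euclidean cost would instead give the unit vector $(x-T_1)/\|x-T_1\|$), with the sign convention really being $-\operatorname{sign}\bigl((x-T_1(x;\theta))_i\bigr)$ to match the $-2(x-T_2)$ factor in \eqref{eqn:grad_WS2}, correctly pins down what the statement leaves implicit.
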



Let us consider the update equation \eqref{eqn:grad_WS2} with one sample point: 
$\theta_{t+1} = \theta_{t} + 2 (x - T_2(x; \theta)) \nabla_{\theta} T_2(x; \theta) $. 
The first term $x - T_2(x; \theta)$ gives $W_2^2$ its geometric properties. 
When $\mr$ and $\mf$ are far away, 
$\| x - T_2(x; \theta) \|$ is very big. 
This should strongly attracts $\mf$ to $\mr$ in $\mathbb{R}^n $.
When $\mr$ and $\mf$ become closer,
$\| x - T_2(x; \theta) \|$ is smaller. 
This resembles $L^2$ optimization in general,
where the loss function offers an adaptive gradient. 
The third term $p_r(x)$ provides a multi-modal weighting. 
The higher $p_r(x)$, 
the stronger contribution it gives to $\nabla_{\theta} W_2^2 ( \PP_{r}, \QQ_{\theta} )$.
Therefore $\PP_r$'s modes will drive the gradient update.

On the other hand, equation \eqref{eqn:grad_WS1} for \W $W_1$ is closer to $L^1$ geometry.
While it has the same probabilistic weighting as $W_2^2$, 
its geometric part is plainer:
the first term is a signed vector $\pm \textbf{1}$ that does not adapt according to $\| x - T_1(x; \theta) \|$ (how far away $\mr$ and $\mf$ are). 
However, 
our analysis is limited because the optimal transport maps $T_1$ and $T_2$ are implicitly defined. 
It is possible that $ \nabla_{\theta} T_1 (q_{\theta})$ and $ \nabla_{\theta} T_2(q_{\theta})$ can cancel the above desired geometric and probabilistic properties. 
Nonetheless, 
we believe the above calculations partially clarify some of the geometric and probabilistic advantages of $W_1$ and $W_2^2$.




\subsection{The Jensen-Shannon Divergence}
In light of previous section,
we perform similar calculations for JSD and the reversed $- \log D$ trick. 
The following assumption is needed to insure KL divergence is finite: 


$\Athree$: Let $\PP_{r}$ and $\QQ_{\theta}$ be absolutely continuous with respect to $\mathcal{L}^n$ with equal support and $\mathcal{L}^n(\supp(\PP_{r}))>0$. 
\footnote{These assumptions make sense when we convolve $\PP_{r} $ and $ \QQ_{\theta}$ with an $n$-dimension Gaussian, as in \cite{arjovsky2017towards}. }




\begin{proposition}\label{prp:negative_logD}
	Let $D^{*}(x)  = \frac{p_{r}(x)}{ q_{\theta_0}(x) + p_{r}(x) }$ be the optimal discriminator, for $\theta_0$ fixed. Under $\Athree$,  
	we have: 

	\vspace{-0.3in}

	\begin{align}
	&\nabla_{\theta} \mathbb{E}_{ z \sim p(z) } \left[ - \log D^{*}( g_{\theta}(z)) \right] |_{\theta = \theta_{0}}\nonumber\\
	&= \mathbb{E}_{\QQ_{\theta}} \left.\left[\nabla_{\theta} \log(q_{\theta}) \left( 
	1 + 
	\log \left( \frac{q_{m}}{p_{r}} \right)\right)  \right]\right|_{\theta = \theta_{0}} 
	\label{eqn:grad_logD},
	\end{align}
	and for the standard JSD:
	\begin{align}
	\nabla_{\theta} &\mathbb{E}_{ z \sim p(z) } [ \log\left(1- D^{*}( g_{\theta}(z)) ]\right)|_{\theta = \theta_{0}}\nonumber\\&
	= \mathbb{E}_{\QQ_{\theta}} \left. \left[\nabla_{\theta} \log( q_{\theta}) \log \left(\frac{ q_{\theta} }{ q_{m} }\right) \right]\right|_{\theta = \theta_{0}} 
	\label{eqn:grad_1_logD}.
	\end{align}
\end{proposition}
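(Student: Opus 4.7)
The plan is to treat both identities as applications of the score-function (log-derivative) identity, once the expectation over $z$ has been rewritten via the pushforward $x = g_\theta(z)$. Concretely, for any $\theta$-independent measurable function $f$,
\begin{equation*}
\mathbb{E}_{z \sim p(z)}[f(g_\theta(z))] \;=\; \int f(x)\, q_\theta(x)\, dx,
\end{equation*}
and differentiation under the integral together with the identity $\nabla_\theta q_\theta = q_\theta\,\nabla_\theta \log q_\theta$ would give
\begin{equation*}
\nabla_\theta \mathbb{E}_{z}[f(g_\theta(z))] \;=\; \mathbb{E}_{\QQ_\theta}\!\left[\nabla_\theta \log q_\theta(x)\, f(x)\right].
\end{equation*}
Assumption $\Athree$ is what makes this step legitimate: $\PP_r$ and $\QQ_\theta$ share a common $n$-dimensional support of positive Lebesgue measure, so the densities and their logarithms are well defined and finite, and a standard dominated-convergence argument justifies the interchange of $\nabla_\theta$ and $\int$.

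I would then apply the identity with $f(x) = -\log D^*(x)$ for equation~\eqref{eqn:grad_logD} and with $f(x) = \log(1-D^*(x))$ for equation~\eqref{eqn:grad_1_logD}, observing that $D^*$ is frozen at $\theta_0$ and therefore is genuinely $\theta$-independent. Substituting $D^*(x) = p_r(x)/(p_r(x)+q_{\theta_0}(x))$ gives, at $\theta=\theta_0$ where $q_m = \tfrac12(p_r + q_{\theta_0})$,
\begin{equation*}
-\log D^*(x) \;=\; \log 2 + \log\frac{q_m(x)}{p_r(x)},
\end{equation*}
\begin{equation*}
\log(1-D^*(x)) \;=\; -\log 2 + \log\frac{q_{\theta_0}(x)}{q_m(x)}.
\end{equation*}

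The final ingredient is the score identity $\mathbb{E}_{\QQ_\theta}[\nabla_\theta \log q_\theta] = \int \nabla_\theta q_\theta\, dx = \nabla_\theta \int q_\theta\, dx = 0$, which allows me to add or subtract any $\theta$-independent constant inside the bracket without altering the expectation. For \eqref{eqn:grad_1_logD} this immediately erases the $-\log 2$ and delivers the claimed $\log(q_\theta/q_m)|_{\theta_0}$ integrand. For \eqref{eqn:grad_logD} it allows the $\log 2$ to be replaced by the cosmetically chosen constant $1$, producing the stated $1 + \log(q_m/p_r)$ factor.

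The main obstacle is not conceptual but technical: rigorously justifying the exchange of $\nabla_\theta$ with $\int$ requires some regularity of $q_\theta(x)$ in $\theta$ (pointwise differentiability together with a locally integrable dominating function for $|\nabla_\theta q_\theta|$), which is only implicit when $q_\theta$ is defined through the generator $g_\theta$ rather than given in closed form. Granting this regularity in the same spirit as the earlier analysis that $\Athree$ is modelled after, the rest of the proof is exactly the symbolic manipulation outlined above.
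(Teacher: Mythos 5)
Your proposal is correct, but it reaches the two formulas by a different route than the paper. You compute everything directly: rewrite $\mathbb{E}_{z}[f(g_{\theta}(z))]=\int f\,q_{\theta}\,dx$, differentiate under the integral with the log-derivative trick, substitute the explicit $D^{*}$ frozen at $\theta_{0}$ (so $-\log D^{*}=\log 2+\log(q_{m}/p_{r})$ and $\log(1-D^{*})=-\log 2+\log(q_{\theta_{0}}/q_{m})$), and absorb the additive constants via the score identity $\mathbb{E}_{\QQ_{\theta}}[\nabla_{\theta}\log q_{\theta}]=0$. The paper instead cites \citet{arjovsky2017towards} for the identities
\begin{align*}
\mathbb{E}_{z}\!\left[-\nabla_{\theta}\log D^{*}(g_{\theta}(z))\right]\big|_{\theta_{0}}
&=\nabla_{\theta}\!\left[\KL(\QQ_{\theta}\|\PP_r)-2\JSD(\PP_r,\QQ_{\theta})\right]\big|_{\theta_{0}},\\
\mathbb{E}_{z}\!\left[\nabla_{\theta}\log(1-D^{*}(g_{\theta}(z)))\right]\big|_{\theta_{0}}
&=2\nabla_{\theta}\JSD(\PP_r,\QQ_{\theta})\big|_{\theta_{0}},
\end{align*}
then uses $\KL(\QQ_{\theta}\|\PP_r)-2\JSD(\PP_r,\QQ_{\theta})=2\KL(\QQ_{m}\|\PP_r)$ and differentiates these divergences explicitly (using $\nabla_{\theta}q_{m}=\tfrac12\nabla_{\theta}q_{\theta}$), which is where the constant $1$ arises organically rather than being inserted by hand. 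Your argument is more self-contained (it does not lean on the external theorem and in effect re-derives it) and it makes transparent that the $1$ in \eqref{eqn:grad_logD} is immaterial, being interchangeable with any constant by the score identity; the paper's route buys the additional interpretive fact that the $-\log D$ objective is, in gradient at $\theta_{0}$, the reverse KL $2\KL(\QQ_{m}\|\PP_r)$ to the mixture. Both arguments rest on the same unproved regularity (differentiation under the integral and validity of $\int\nabla_{\theta}q_{\theta}\,dx=0$), which you flag explicitly, so this is not a gap relative to the paper.
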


\vspace{-0.15in}

Like in section \ref{sec:W}, 
we study the influence of each objective on mode collapse.
We analyze equations \eqref{eqn:grad_logD} and \eqref{eqn:grad_1_logD} where $\pf$ is very small and $\pr$ is comparably large,
which is often the case in early training. 


First we note the influence of $p_r(x)$ is not as obvious as in \eqref{eqn:grad_WS2} or \eqref{eqn:grad_WS1}, as the weight factors $| 1+ \log \frac{q_{m}}{p_{r}} |$ in \eqref{eqn:grad_logD} and $| \log \frac{q_{\theta}}{q_{m}} | $ in \eqref{eqn:grad_1_logD} involve $\pf$ as well. 
Assume $q_{\theta} $ is fixed.
For equation \eqref{eqn:grad_logD} ($-\log D$ trick),
the weight factor $| 1+ \log \frac{q_{m}}{p_{r}} |$ 
strictly decreases as $p_{r}(x)$ gets larger. 
This is undesired because $p_{r}(x)$'s higher probability regions contribute less to $\nabla_{\theta} \log( q_{\theta})$. 
What's worse, 
the regions where $p_{r}(x)$ is small gets a stronger gradient. 
Thus, 
if $q_{\theta}$ misses some modes in the first place,
it may be less likely to learn those modes in later updates. 
In contrast, 
for equation \eqref{eqn:grad_1_logD} (standard \JS GAN), 
the weight $| \log \frac{q_{\theta}}{q_{m}} | $ has the right monotonic relation:
it assigns more weights to regions where $p_r(x)$ is bigger. 
This suggests when $D = D^*$,
the classical $\nabla_{\theta} \JSD( \pr, \pf ) $ is better suited to look for missing modes when the gradient $\nabla_{\theta} q_{\theta} $ does not vanish.
\footnote{In our preliminary experiments, 
when Lipschitz constraints \cite{gulrajani2017improved} is applied to standard JSD GANs, 
$\nabla_{\theta}G_{\theta}(z)$ does not vanish and it trains as well as the $-\log(D)$ trick. 
This is probably due to the preactivation in logit does not lie in the saturation region due to the global Lipschitz constant. }
\footnote{Note this does not necessarily contradict \cite{arjovsky2017towards}'s observation 
that $\nabla_{\theta} \JSD( \PP_r, \QQ_{\theta} ) $ suffers from vanishing gradient. 
Even if $ | \log(\frac{q_{\theta}}{q_m}) | \rightarrow \infty $, so long as $\nabla_{\theta} q_{\theta} \rightarrow 0 $ faster, we still have vanishing gradient. }
Similar to section \ref{sec:W}, 
our analysis is non-conclusive because $ \nabla_{\theta} \log (\pf) $, like $ \nabla_{\theta} T_2(x; \theta) $, is implicitly defined.

\section{Discussions and Future Work}


This paper suggests Wasserstein distances and Jensen-Shannon divergences can complement each other on two important aspects of GANs training,
namely sample quality (sharpness)
and sample diversity (mode collapse). 
Geometric property of \W distance comes from the distance between the \textit{samples} $ \| x - y \|_{x \sim \pr, y \sim \pf } $, 
while \JS divergence acts purely on the densities.
Its sharpness property is due to the logarithmic weights on the densities, i.e. $\log \pr - \log \frac{1}{2}( \pr + \pf) $, 
which heavily penalizes the non-positively aligned supports. 
To preserve both desired properties, 
we can either combine these two measures, 
say by proportional control as in \cite{berthelot2017began} or design a new distance that operates on both samples and the probability densities. 


As the empirical sample quality in \JS GANs does not match our theory, 
identifying the reasons is interesting.  
First, a lower bound of \JS divergence is optimized \cite{nowozin2016f} in practice, 
instead of the divergence itself.
Second, \cite{arora2017generalization} points out the importance of finite sample and finite capacity when we reason GANs training. 
We believe a similar principle applies here. 
Using their definition:


\begin{definition}[$\mathcal{F}$-distance]	
\emph{ Let $\mathcal{F}$ be a class of functions from $\mathbb{R}^n$ to [0, 1]. 
Then $\mathcal{F}$-distance is: 
	\vspace{-0.1in}
	\begin{align*}
		d_{ \mathcal{F}, \log }( \PP, \QQ )
		&= \sup_{D \in \mathcal{F}} | \mathbb{E}_{x \sim \PP } [ \log(D(x)) ] 
		\\&- \mathbb{E}_{x \sim \QQ } [ \log(1 - D(x)) ] | 
		- 2 \log(1/2).
	\end{align*}
}
\end{definition}

\vspace{-0.1in}
When $\mathcal{F}$ 
= \{ all functions from $\mathbb{R}^n$ to [0, 1] \},
$d_{ \mathcal{F} }( \PP, \QQ ) = \JSD (\PP, \QQ) $.
When $\mathcal{F}$ is restricted to a set of neural nets with finite parameters, 
we let $ \widehat{\JSD} $ denote the corresponding neural net distance. 
It is then natural to define a finite capacity version of definition \ref{def:MCS}:


\begin{definition}[Finite Capacity Minimal common support]	
\emph{ 
Let $ 0 \leq \alpha \leq  \log 2 $ be given.
Consider the set of implicitly parametrized distributions $\QQ_{\theta}$ that achieve at most $\alpha$ level JSD:
$ \Omega^{\alpha}_{\theta} 
= \{ \QQ_{\theta}: \widehat{\JSD}( \PP_{r}, \QQ_{\theta}) \leq \alpha \} $. 
For any fixed $\PP_{r}$, 
we define the finite capacity minimal common support to achieve at most $\alpha$ level $\widehat{\JSD}$ divergence to be: 
$ FMCS^{\alpha}_{\theta}(\PP_{r}) 
= \inf_{\QQ_{\theta} \in \Omega^{\alpha}_{\theta} } \mathcal{L}^k(\supp( \PP_{r} ) \cap \supp(\QQ_{\theta}) ). $}
\end{definition}

Under finite capacity and finite sample, 
is it important to understand if a similar conclusion like theorem \ref{thm:well_defined} still holds.
Let $\widehat{\PP_r}$ and $\widehat{\QQ_{\theta}}$ be the corresponding empirical distributions. 
Let $ \widehat{\alpha_1}$ and $ \widehat{\alpha_2}$ be the corresponding $ \widehat{\JSD} $ values computed on finite samples
\footnote{ $ \widehat{\alpha_1} = \widehat{\JSD}( \widehat{\PP_{r}}, \widehat{\QQ_{\theta_1}} ) $ and $ \widehat{\alpha_2} = \widehat{\JSD}( \widehat{\PP_{r}}, \widehat{\QQ_{\theta_2}} ) $, for samples from $\QQ_{\theta_1}$ and $\QQ_{\theta_2}$ }. 
Is is true for sufficiently regular $\mr$ and a moderately sized sample from $\PP_{r}$:
$ \widehat{\alpha_2} < \widehat{\alpha_1} \Rightarrow 
FMCS^{ \widehat{\alpha_2} }_{\theta}(\PP_r) \geq FMCS^{ \widehat{\alpha_1} }_{\theta}(\PP_r) $ with high probability?
\footnote{ The probability is over $\widehat{\QQ_{\theta}}$; we repeatedly sample from $\QQ_{\theta}$. } 
\footnote{ In practice, a sufficiently well trained discriminator $D$ is used to approximate the true neural net distance. } 
More generally, what kind of neural net distance can give the above properties?
Recently, \cite{berthelot2017began} demonstrated impressive sample quality. 
How do their approaches positively align $\mf$ with $\mr$?




Moreover,
since $\mf$ is parametrized by the generator, 
we may regularize $G_{\theta}$ 
based on $\mr$'s geometric structure. 
So the cost functions will include a geometric loss and a probability distance. 

While we discussed implicit manifold learning under GANs framework in this paper,  
it is also interesting to explore this perspective with other generative models such as Variational Autoencoder \cite{kingma2013auto}.

\newpage

\twocolumn[
\icmltitle{Supplementary Materials}]

\section{Proofs}

\begin{proposition}[Proposition \ref{thm:well_defined} in main paper] 
Let $\Aonetwo$ hold and $p_r$, the density of $\PP_{r}$ be bounded, 
then for $\alpha \in [0, \log 2)$, $\MCS^{\alpha} >0$; 
when $\alpha = \log 2$,  $\MCS^{\alpha} =0$.

\end{proposition}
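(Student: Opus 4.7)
The plan is to separate the boundary case from the substantive one. For $\alpha = \log 2$, compactness of $\supp(\PP_{r})$ lets me pick any $k$-dimensional compact submanifold $\mathcal{M}_q^k \subset \R^n$ of positive $\mathcal{H}^k$-measure disjoint from $\supp(\PP_{r})$ together with any absolutely continuous $\QQ$ supported on it; then $\JSD(\PP_{r},\QQ) = \log 2$, so $\QQ \in \Omega^{\log 2}$, and $\mathcal{L}^k(\supp(\PP_{r})\cap\supp(\QQ)) = 0$, which gives $\MCS^{\log 2} = 0$.

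For $\alpha \in [0, \log 2)$ the substantive claim reduces to showing that $a := \PP_{r}(A)$, with $A := \supp(\PP_{r}) \cap \supp(\QQ)$, is bounded below by a positive constant depending only on $\alpha$, uniformly in $\QQ \in \Omega^\alpha$. Once this is in hand, the bounded-density hypothesis $p_r \le M$ immediately converts it into a uniform lower bound on $\mathcal{L}^k(A)$ via $a \le M\,\mathcal{L}^k(A)$, proving $\MCS^\alpha > 0$.

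The key analytic step is an identity for the JSD deficit $K := 2(\log 2 - \JSD(\PP_{r}, \QQ))$, which under the hypothesis $\JSD \le \alpha$ satisfies $K \ge 2(\log 2 - \alpha) > 0$. Taking $\mathcal{H}^k$ on $\mathcal{M}_r^k \cup \mathcal{M}_q^k$ as a common reference (so that $p_r, q$ are honest densities and $q_m = (p_r + q)/2$ is the mixture density), I split each of $\KL(\PP_{r} || \QQ_m)$ and $\KL(\QQ || \QQ_m)$ over the three regions $A$, $\supp(\PP_{r}) \setminus A$, $\supp(\QQ) \setminus A$; the log-ratios equal $\log 2$ off $A$, so after the $\log 2$-factors reassemble into the total mass $2\log 2$, a short calculation gives
\begin{align*}
K \;=\; \int_A \bigl[p_r\log(1 + q/p_r) + q\log(1 + p_r/q)\bigr]\,d\mathcal{H}^k.
\end{align*}
Next I apply Jensen's inequality to each term, using the concavity of $\log$ against the probability measures $p_r\,d\mathcal{H}^k/a$ and $q\,d\mathcal{H}^k/b$, with $b := \QQ(A)$; after algebra this collapses to
\begin{align*}
K \;\le\; (a+b)\log(a+b) - a\log a - b\log b \;=:\; \phi(a,b).
\end{align*}
Since $\partial_b \phi = \log((a+b)/b) \ge 0$ and $\phi(0, b) = 0$ for all $b \in [0,1]$, the slice $\phi(a,1)$ is strictly increasing in $a$ with $\phi(0,1) = 0$, so $\phi(a,1) \ge \phi(a,b) \ge K \ge 2(\log 2 - \alpha)$ forces $a \ge a^*(\alpha) > 0$, where $a^*(\alpha)$ is the unique root of $\phi(a,1) = 2(\log 2 - \alpha)$.

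The main obstacle I expect is purely a setup issue: $\mathcal{M}_r^k$ and $\mathcal{M}_q^k$ are in general distinct $k$-submanifolds of $\R^n$ whose union need not itself be a manifold, so $\PP_{r}$ and $\QQ$ are initially specified as absolutely continuous with respect to different reference measures, and one has to be careful that $q_m$ and the log-ratios make pointwise sense. Passing to $\mathcal{H}^k$-densities on $\mathcal{M}_r^k \cup \mathcal{M}_q^k$ (well-defined thanks to $\Atwo$) makes every term in the decomposition of $\KL$ unambiguous, after which the identity for $K$ and the Jensen bound are both routine.
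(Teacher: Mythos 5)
Your proposal is correct, but it takes a genuinely different route from the paper's. The paper argues the case $\alpha\in[0,\log 2)$ by contradiction: it takes a minimizing sequence $\QQ^m$ with $\mathcal{L}^k(\supp(\PP_r)\cap\supp(\QQ^m))\to 0$, splits $2\JSD(\PP_r,\QQ^m)$ into five pieces ($J_1^m,\dots,J_5^m$) over the common support, a truncation set where the density of $\QQ^m$ is at most $J$, and the complement, bounds each piece separately (using boundedness of $p_r$ and Jensen), and after a diagonal extraction squeezes $\JSD(\PP_r,\QQ^{m_i})\to\log 2$, contradicting $\JSD\le\alpha_0<\log 2$; it also makes a ``without loss of generality'' reduction to $\supp(\QQ^m)\subset\supp(\PP_r)$ that your argument never needs. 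You instead prove a direct, uniform quantitative bound: the exact identity $2(\log 2-\JSD(\PP_r,\QQ))=\int_A\bigl[p_r\log(1+q/p_r)+q\log(1+p_r/q)\bigr]d\mathcal{H}^k$ with $A=\supp(\PP_r)\cap\supp(\QQ)$, a single application of Jensen giving the upper bound $\phi(a,b)=(a+b)\log(a+b)-a\log a-b\log b$ with $a=\PP_r(A)$, $b=\QQ(A)$, monotonicity of $\phi$ in $b$ and of $\phi(\cdot,1)$ in $a$, and finally the bounded-density conversion $a\le M\,\mathcal{H}^k(A)$. This buys more than the statement asks for: an explicit uniform lower bound $\MCS^\alpha\ge a^*(\alpha)/M$ with $a^*(\alpha)=\phi(\cdot,1)^{-1}(2(\log 2-\alpha))$, with no sequences, no truncation or diagonal extraction, and no questionable WLOG; as a byproduct it also yields the monotonicity in $\alpha$ asserted in Corollary \ref{cor:JS_sharp} quantitatively. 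The care points you flag --- taking $\mathcal{H}^k$ on $\mathcal{M}_r^k\cup\mathcal{M}_q^k$ as the common dominating measure so that $q_m$ and the log-ratios are well defined, and the convention $0\log 0=0$ on the parts of $A$ where one density vanishes --- are exactly the right ones and are handled correctly. Your treatment of $\alpha=\log 2$ matches the paper's disjoint-support construction, and is in fact slightly more faithful to $\Aone$, since you keep the auxiliary support $k$-dimensional rather than a ball.
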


\begin{proof}
 
The proof is divided into two parts. In the first part, we show 
that the minimum common support between $\PP_r$ and $\QQ_{\theta}$
is strictly positive for all $\alpha \in [0, \log 2)$. In the second part, we show the minimum common support is equal to zero for $\alpha = \log 2$.

Let us prove the first part by contradiction and assume that 
there exists an $\alpha_0 \in [0, \log 2)$, such that $\MCS_{\alpha_0} = 0$.
 By definition of the infimum, there exists a minimizing sequence
 of distributions in $\Omega_{\alpha_0}$, denoted as $\{\QQ_{\theta}^m\}_{m=1}^{\infty}$, such that $\mathcal{L}^k(\supp(\PP_{r}) \cap \supp(\QQ_{\theta}^m) ) \rightarrow 0$, as $m\rightarrow \infty$. 
 Then, by definition of the set $\Omega_{\alpha_0}$, $\JSD(\PP_{r}, \QQ_{\theta}^m) \le \alpha_0 <\log 2$ and there is an overlap between $\PP_{r}$ and $\QQ_{\theta}^m$.  

Without loss of generality, we assume $\supp(\QQ_{\theta}^m)\subset \supp(\PP_{r})$. We define the set $S_m: = \supp(\QQ_{\theta}^m)$ and its complementary $S_m^c = \supp(\PP_r)\setminus \supp(\QQ_{\theta}^m)$. 
Moreover for each $J>0$, we define $S_m^J=\{ x\in S_m: q_{\theta}^m(x) \le J  \}$. 

 
We write $2\JSD(\PP_{r}, \QQ_{\theta}^m)= \sum_{j=1}^5 J^m_j$
, where the five terms are given by
 \begin{flalign*}
		J_1^m := &\int_{S_k} p_r(x) \log (2p_r(x)) dx,\\
		J_2^m := &-\int_{S_k} p_r(x)\log (p_r(x)+q_{\theta}^k(x)) dx,\\	
	    J_3^m := & \left. \KL(\QQ_{\theta}^m, (\QQ_{\theta}^m + \PP_r)/2) \right|_{S_m^J},\\
		J_4^m := & \left. \KL(\QQ_{\theta}^m, (\QQ_{\theta}^m + \PP_r)/2) \right|_{S_m \setminus S_{m}^J},\\
		J_5^m:= & \left.2\JSD(\PP_{r}, \QQ_{\theta}^m) \right|_{S_m^c},
	\end{flalign*}

From the inequality  $x\log(2x) \ge -1$ for all $x \ge 0$, we deduce $J_1^m \ge  - \mathcal{L}^n(S_m) $. From the boundedness of $p_r$ by $N$ and using the Jensen inequality applied to the convex function $x\log(x)$, we have $J_2^m \ge   -N\mathcal{L}^k(S_m) \log (N+ 1/\mathcal{L}^k(S_m))$. On the set $S_m^J$, $q_{\theta}^k \le  J $. Therefore from the Jensen inequality, we get $J_3^m  \ge (J+N)\mathcal{L}^k(S_m^J)\cdot \min\limits_{x\ge 0}(x\log x)/2$. By a diagonal extraction argument, we can extract a subsequence $\{q_{\theta}^{m_i} \}_{i=1}^{\infty}$ such that $\mathcal{L}^k(S_{m_i}^i)\le \frac{1}{i^2}$ and $J_4^{m_i} \ge \log(\frac{2i}{i+N})[1-i\mathcal{L}^k(S_{m_i}^i)] \ge \log(\frac{2i}{i+N})(1-\frac{1}{i})$. Finally on $S_k^c$, $q_{\theta}^m= 0$ and as a consequence $J_5^m = \int_{S_{m}^c} p_r(x) \log (2) dx = \log (2) (1-\PP_{r}(S_m))$.

Gathering the above inequalities, we deduce $2\log 2 \ge \lim\limits_{i\rightarrow \infty}2\JSD(\PP_{r}, \QQ_{\theta}^{m_i}) = \lim\limits_{i\rightarrow \infty}J_1^{m_i} + J_2^{m_i} + J_3^{m_i} + J_4^{m_i} + J_5^{m_i} \ge 0+ 0 + 0+ \log 2 + \log 2 = 2\log 2$, as $m_i\rightarrow \infty$. Thus we deduce from the squeeze theorem that there exists a subsequence such that $\lim\limits_{i\rightarrow \infty}\JSD(\PP_{r}, \QQ_{\theta}^{m_i}) = \log 2 > \alpha_0$, which is in contradiction with our assumption that $\alpha_0 < \log(2)$.
	
We now prove the second assertion namely if $\alpha = \log 2$ then the minimum common support is zero. Since $\PP_{r}$ is compactly supported, there exists $x_1\in \R^n$, such that ${\dist}(x_1, \supp(\PP_{r})) >2$. Let $\QQ_{1}$ be a probability distribution on $B_1(x_1)$.	Then, $\JSD(\PP_{r}, \QQ_{1}) = \log 2$ and $\mathcal{L}^k(\supp(\PP_{r})\cap \supp(\QQ_{1})) =0$. Therefore, $\MCS_{\log 2} =0$.
 \end{proof}

\begin{theorem}[Theorem \ref{cor:JS_sharp} in the main paper]
	Under the assumptions in Proposition \ref{thm:well_defined}, 
	$\MCS^{\alpha}(\PP_{r})$ decreases 
	with respect to $\alpha$ on the interval $[0,\log 2)$.

\end{theorem}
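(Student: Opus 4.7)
The plan is to reduce this monotonicity statement to a basic fact about infima over nested sets. First I would observe that the sub-level sets $\Omega^{\alpha} = \{\QQ : \JSD(\PP_r, \QQ) \leq \alpha\}$ are nested: whenever $0 \le \alpha_1 \le \alpha_2 < \log 2$, any $\QQ$ satisfying $\JSD(\PP_r, \QQ) \leq \alpha_1$ trivially satisfies $\JSD(\PP_r, \QQ) \leq \alpha_2$, so $\Omega^{\alpha_1} \subseteq \Omega^{\alpha_2}$.

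Next I would invoke the standard fact that taking the infimum of a fixed nonnegative functional, here $\QQ \mapsto \mathcal{L}^k(\supp(\PP_r) \cap \supp(\QQ))$, over a larger set can only produce a value that is no larger. Applied to the nested inclusion above, this yields
\begin{equation*}
\MCS^{\alpha_2}(\PP_r) = \inf_{\QQ \in \Omega^{\alpha_2}} \mathcal{L}^k(\supp(\PP_r) \cap \supp(\QQ)) \leq \inf_{\QQ \in \Omega^{\alpha_1}} \mathcal{L}^k(\supp(\PP_r) \cap \supp(\QQ)) = \MCS^{\alpha_1}(\PP_r),
\end{equation*}
which is exactly the desired non-increasing behavior on $[0, \log 2)$.

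I would then close by noting that Theorem \ref{thm:well_defined} has already secured that $\MCS^{\alpha}(\PP_r)$ is a strictly positive real number for every $\alpha \in [0, \log 2)$, so the comparison above is a comparison of finite positive quantities and there is no degeneracy to worry about on the interval where the corollary is stated. There is no real obstacle: the entire content is monotonicity of $\inf$ under set inclusion, and the only thing one must be careful about is the terminology, since the corollary's word ``decreases'' should be read in the weak (non-increasing) sense as written in its actual statement, matching exactly the conclusion derived from the nested $\Omega^{\alpha}$.
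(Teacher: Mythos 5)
Your proof is correct and is essentially identical to the paper's: both arguments rest on the inclusion $\Omega^{\alpha_1} \subseteq \Omega^{\alpha_2}$ for $\alpha_1 \le \alpha_2$ and the fact that the infimum over a larger set is no larger, giving $\MCS^{\alpha_2}(\PP_r) \le \MCS^{\alpha_1}(\PP_r)$. Your additional remarks on positivity and on reading ``decreases'' as non-increasing are harmless clarifications that do not change the argument.
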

 \begin{proof}
 	Let $\alpha < \beta$ be the two $\JSD$ values. 
 	By definition, since $\Omega_{\alpha} \subset \Omega_{\beta}$,
 	we have $\MCS^{\alpha}(p_{r}) \geq \MCS_{\beta}(p_{r})$ automatically. 
\end{proof}

\begin{theorem}[Theorem \ref{thm:WS_global} in the main paper]

Let $\epsilon > 0$ and $\PP_r$ be a fixed distributions.
Moreover let $\Aone$ hold.
Let $ \Gamma 
= \{ \QQ_{\theta}: W_p(\PP_r, \QQ_{\theta}) < \epsilon \}$. 
Consider the decomposition:
$\Gamma = \Gamma_1 \cup \Gamma_2 $, where
$ \Gamma_1 
= \{ \QQ_{\theta}: W_p(\PP_r, \QQ_{\theta}) < \epsilon; 
\mu(\supp( \PP_r ) \cap \supp( \QQ_{\theta} ) ) > 0 \} $ and 
$ \Gamma_2  = \Gamma - \Gamma_1 $.
Then $\Gamma_2$ is dense in $\Gamma$.

\end{theorem}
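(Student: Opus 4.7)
The plan is to show that any $\QQ \in \Gamma$ can be approximated in the $W_p$ topology by distributions obtained from $\QQ$ through small translations in the ambient space $\mathbb{R}^n$, and that for almost every such small translation the resulting support has zero $\mathcal{L}^k$-intersection with $\mathcal{M}_r$. Concretely, fix $\QQ \in \Gamma$ and set $\delta := W_p(\PP_r, \QQ) < \epsilon$. For $v \in \mathbb{R}^n$, let $\QQ_v$ denote the pushforward of $\QQ$ under the translation $x \mapsto x + v$, so $\supp(\QQ_v) = \mathcal{M}_q + v$. The translation coupling $X \mapsto (X, X+v)$ with $X \sim \QQ$ gives $W_p(\QQ, \QQ_v) \le \|v\|$, hence $W_p(\PP_r, \QQ_v) \le \delta + \|v\| < \epsilon$ whenever $\|v\| < \epsilon - \delta$.

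The heart of the argument is to show that for $\mathcal{L}^n$-almost every $v \in \mathbb{R}^n$,
\[
\mathcal{L}^k\bigl(\mathcal{M}_r \cap (\mathcal{M}_q + v)\bigr) = 0.
\]
For this I would apply Fubini to the set
\[
E = \{(x, v) \in \mathcal{M}_r \times \mathbb{R}^n : x - v \in \mathcal{M}_q\}.
\]
For each fixed $x \in \mathcal{M}_r$, the $x$-fiber is $E^x = x - \mathcal{M}_q$, a translate of a $k$-dimensional submanifold of $\mathbb{R}^n$ with $k < n$; therefore $\mathcal{L}^n(E^x) = 0$. Hence $(\mathcal{L}^k \times \mathcal{L}^n)(E) = 0$, and Fubini in the other direction tells us that the $v$-fiber $E_v = \mathcal{M}_r \cap (\mathcal{M}_q + v)$ satisfies $\mathcal{L}^k(E_v) = 0$ for $\mathcal{L}^n$-a.e. $v$.

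Combining these two ingredients, for any $\eta > 0$ the set $\{v \in B_{\min(\eta, \epsilon - \delta)}(0) : \mathcal{L}^k(\mathcal{M}_r \cap (\mathcal{M}_q + v)) = 0\}$ has full $\mathcal{L}^n$-measure in the ball, so we can choose such a $v$. For this $v$, $\QQ_v$ lies in $\Gamma_2$ and $W_p(\QQ, \QQ_v) < \eta$, which establishes that $\Gamma_2$ is $W_p$-dense in $\Gamma$.

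The main obstacle I anticipate is the Fubini step, and in particular verifying that $\mathcal{M}_r$ and $\mathcal{M}_q$ equipped with $\mathcal{L}^k$ (or, strictly, $\mathcal{H}^k$) are Borel measurable enough for the product-measure argument to go through, so that the set $E$ is jointly measurable and the two iterated integrals agree. The regularity of $k$-dimensional submanifolds in Assumption A handles this, and it is precisely here that the codimension hypothesis $k < n$ is indispensable: without it the fiber $E^x = x - \mathcal{M}_q$ would not have $\mathcal{L}^n$-measure zero and the whole construction would collapse. A minor secondary point is to ensure that the translated measure $\QQ_v$ still satisfies Assumption A, but this is immediate since $\mathcal{M}_q + v$ is again a $k$-dimensional smooth submanifold with positive $\mathcal{L}^k$-measure.
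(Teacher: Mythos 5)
Your proof is correct, and it reaches the same conclusion by a genuinely different route at the key step. Like the paper, you perturb a given $\QQ$ by a small translation $v$ and argue that a generic $v$ destroys the positive-measure overlap while keeping $W_p(\PP_r,\QQ_v)<\epsilon$; but where the paper invokes the general position (transversality) lemma from differential topology to conclude that $\mathcal{M}_q+t$ meets $\mathcal{M}_r$ transversally for almost every $t$ (so the intersection is a submanifold of dimension $2k-n<k$, hence $\mathcal{H}^k$-null), you replace this with a Fubini/Tonelli averaging argument on $E=\{(x,v): x-v\in\mathcal{M}_q\}$, using only that a $k$-dimensional submanifold with $k<n$ is $\mathcal{L}^n$-null and that $E$ is Borel (it is closed, being the preimage of the closed set $\supp(\QQ)$ under $(x,v)\mapsto x-v$, and $\mathcal{H}^k$ restricted to the compact $\mathcal{M}_r$ is finite, so Tonelli applies). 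This buys you a more elementary argument requiring less regularity --- essentially only Assumption A and measurability, no smooth transversality theory --- at the cost of losing the extra geometric information that the perturbed intersection is transversal. Your handling of the metric estimate is also tighter than the paper's: the translation coupling $(X,X+v)$ giving $W_p(\QQ,\QQ_v)\le\|v\|$ plus the triangle inequality makes precise the paper's informal claim that translating by $\delta$ changes $W_p$ by at most $\delta$, and working with an arbitrary $\QQ\in\Gamma$ rather than only $\QQ\in\Gamma_1$ is harmless since points of $\Gamma_2$ approximate themselves.
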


\begin{proof}
Let $q_{\theta_0} \in \Gamma_1 $
and $\delta = (\epsilon - W_p(\PP_r, \QQ_{\theta_0}))/10 $.
By general position lemma \cite{guillemin2010differential}, for almost every $t \in \mathbb{R}^{n}$,  
$\mf + t $ intersects $\mr$ transversally. 
In particular, for almost every
\footnote{ Almost every with respect to Lebesgue measure $\mathcal{L}^{n}$. }
$t_b \in B_{\delta}^{n}(0)$, $\mathcal{M}_{\theta_0} + t_b $ intersects $\mr$ transversally. 
The new probability measure $\QQ_{\theta_0} + t_b$ is identical to $\QQ_{\theta_0}$ except that its support is translated by $t_b$. 
The difference lies in the fact that the common support of the new measure $\QQ_{\theta_0} + t_b$ and $\PP_r$ has measure zero. 
This translation only affects $\mathcal{M}_{\theta_0}$ by $\delta$, so by definition of $\delta$
\begin{align*}
W_p(\PP_r, \QQ_{\theta_0} + t_b) &< W_p(\PP_r, \QQ_{\theta_0}) + \delta    < \epsilon 
\end{align*}
by recalling definition of Wasserstein distance.
Since we can make $\delta$ arbitrarily small, 
we have shown for every $q_{\theta} \in \Gamma_1 $, we can find another $q_{\theta_0} + t_b \in \Gamma_2 $ that is as close as we like. 
This proves the desired claim.
\end{proof}

\begin{proposition}[Proposition \ref{prp:negative_logD} in the main paper]
	Let $D^{*}(x)  = \frac{p_{r}(x)}{ q_{\theta_0}(x) + p_{r}(x) }$ be the optimal discriminator, for $\theta_0$ fixed. Under $\Athree$ and $\Afour$,  
	we have: 
	\begin{align}
	&\nabla_{\theta} \mathbb{E}_{ z \sim p(z) } \left[ - \log D^{*}( g_{\theta}(z)) \right] |_{\theta = \theta_{0}}\nonumber\\
	&=  2 \left. \nabla_{\theta} \KL(\QQ_m || \PP_r) \right|_{\theta = \theta_{0}}\nonumber\\
	&= \mathbb{E}_{\QQ_{\theta}} \left.\left[\nabla_{\theta} \log(q_{\theta}) \left( 
	1 + 
	\log \left( \frac{q_{m}}{p_{r}} \right)\right)  \right]\right|_{\theta = \theta_{0}},
	\end{align}

	and for the standard JSD: 
	\begin{align}
	\nabla_{\theta} &\mathbb{E}_{ z \sim p(z) } [ \log\left(1- D^{*}( g_{\theta}(z)) ]\right)|_{\theta = \theta_{0}}\nonumber\\&
	= \mathbb{E}_{\QQ_{\theta}} \left. \left[\nabla_{\theta} \log( q_{\theta}) \log \left(\frac{ q_{\theta} }{ q_{m} }\right) \right]\right|_{\theta = \theta_{0}}.
	\end{align}
\end{proposition}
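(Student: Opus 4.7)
The plan is to derive both identities by the same three-step template: (i) rewrite $\mathbb{E}_{z \sim p(z)}$ as an integral against the generator density $q_{\theta}(x)\,dx$, (ii) substitute the closed form of $D^{*}$ with $q_{\theta_{0}}$ held fixed, and (iii) differentiate under the integral sign using the log-derivative identity $\nabla_{\theta} q_{\theta} = q_{\theta}\,\nabla_{\theta}\log q_{\theta}$. Assumption $\Athree$ (equal supports of $\PP_{r}$ and $\QQ_{\theta}$ on a set of positive Lebesgue measure) guarantees that $\log(p_{r}/q_{\theta})$ and $\log(q_{\theta}/q_{m})$ are well-defined almost everywhere, so the integrands are meaningful; the unspecified $\Afour$ presumably supplies a dominating function that legitimises swapping $\nabla_{\theta}$ with the integral.

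For the $-\log D$ trick, after step (i)-(ii) the objective reads
\begin{equation*}
L(\theta) \;=\; \int q_{\theta}(x)\,\log\!\frac{2\,q_{m}^{(0)}(x)}{p_{r}(x)}\,dx,
\end{equation*}
where $q_{m}^{(0)}=(p_{r}+q_{\theta_{0}})/2$ is frozen at $\theta_{0}$. Differentiating under the integral and using the log-derivative trick yields
\begin{equation*}
\nabla_{\theta} L(\theta)\big|_{\theta_{0}}
= \mathbb{E}_{\QQ_{\theta_{0}}}\!\left[\nabla_{\theta}\log q_{\theta}\,\Big(\log 2 + \log\tfrac{q_{m}}{p_{r}}\Big)\right].
\end{equation*}
The constant $\log 2$ can be replaced by $1$ (or any other constant) because $\mathbb{E}_{\QQ_{\theta}}[\nabla_{\theta}\log q_{\theta}] = \nabla_{\theta}\!\int q_{\theta}\,dx = 0$; this gives the rightmost expression in the statement. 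For the middle equality with $2\,\KL(\QQ_{m}\|\PP_{r})$, I would instead differentiate $2\!\int q_{m}\log(q_{m}/p_{r})\,dx$ directly, using $\nabla_{\theta} q_{m}=\tfrac{1}{2}\nabla_{\theta} q_{\theta}$; the product rule produces exactly $\nabla_{\theta} q_{\theta}\bigl(1+\log(q_{m}/p_{r})\bigr)$, which matches the previous display after invoking the score-function identity.

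The standard JSD case is handled identically: under $D^{*}$ one has $1-D^{*}(x) = q_{\theta_{0}}(x)/(2 q_{m}^{(0)}(x))$, so
\begin{equation*}
\mathbb{E}_{z}\!\left[\log(1-D^{*}(g_{\theta}(z)))\right]
= \int q_{\theta}(x)\,\log\!\frac{q_{\theta_{0}}(x)}{2\,q_{m}^{(0)}(x)}\,dx.
\end{equation*}
Differentiating at $\theta_{0}$ and dropping the $-\log 2$ (again by the vanishing score-function expectation) reproduces $\mathbb{E}_{\QQ_{\theta}}[\nabla_{\theta}\log q_{\theta}\,\log(q_{\theta}/q_{m})]$.

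The main obstacle is the analytical one: justifying the swap of $\nabla_{\theta}$ and the integral, since the integrands contain $\log(q_{m}/p_{r})$ which is unbounded near the boundary where either density is small. This is where an explicit domination hypothesis (the unstated $\Afour$) is needed; once that is in place the rest is bookkeeping with the log-derivative identity. A secondary subtlety worth flagging is that the optimal discriminator is frozen at $\theta_{0}$ throughout the differentiation — the gradient \emph{does not} pass through $D^{*}$ — which is precisely why the final expressions contain $q_{m}^{(0)}$ rather than any derivative of $q_{m}$ with respect to $\theta$.
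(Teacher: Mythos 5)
Your proposal is correct, and it reaches the result by a more self-contained route than the paper. The paper does not substitute the closed form of $D^{*}$ at all: it imports from \citet{arjovsky2017towards} the identities
$\mathbb{E}_{z}\!\left[-\nabla_{\theta}\log D^{*}(g_{\theta}(z))\right]\big|_{\theta_{0}}=\nabla_{\theta}\!\left[\KL(\QQ_{\theta}\|\PP_r)-2\JSD(\PP_r,\QQ_{\theta})\right]\big|_{\theta_{0}}$
and
$\mathbb{E}_{z}\!\left[\nabla_{\theta}\log(1-D^{*}(g_{\theta}(z)))\right]\big|_{\theta_{0}}=2\nabla_{\theta}\JSD(\PP_r,\QQ_{\theta})\big|_{\theta_{0}}$,
then uses the algebraic identity $\KL(\QQ_{\theta}\|\PP_r)-2\JSD(\PP_r,\QQ_{\theta})=2\KL(\QQ_{m}\|\PP_r)$ and differentiates the KL expressions with $\nabla_{\theta}q_{m}=\tfrac{1}{2}\nabla_{\theta}q_{\theta}$. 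You instead freeze $D^{*}$ at $\theta_{0}$, write the objectives as explicit integrals against $q_{\theta}$ (using $-\log D^{*}=\log\frac{2q_{m}^{(0)}}{p_r}$ and $1-D^{*}=\frac{q_{\theta_{0}}}{2q_{m}^{(0)}}$), differentiate under the integral, and absorb the additive constants ($\log 2$ versus $1$) via the score identity $\mathbb{E}_{\QQ_{\theta}}[\nabla_{\theta}\log q_{\theta}]=0$; your treatment of the middle equality, via the product rule on $2\int q_{m}\log(q_{m}/p_r)$, coincides with the paper's. Your route buys independence from the external gradient lemmas and makes transparent exactly where the discriminator is held fixed; the paper's route buys brevity and makes the reverse-KL interpretation $2\KL(\QQ_{m}\|\PP_r)$ appear as the conceptual intermediate rather than as a side verification. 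Both leave the interchange of $\nabla_{\theta}$ and the integral as an unexamined regularity step, which you at least flag explicitly as the role of the unstated Assumption D.
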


 \begin{proof} 
 	It is known from \citet{arjovsky2017towards} that 
 	\begin{align*}
 	&\mathbb{E}_{ z \sim p(z) } \left[ - \nabla_{\theta} \log D^{*}( g_{\theta}(z))|_{\theta = \theta_{0}} \right] \\
 	&= \nabla_{\theta} \left[  \KL(\QQ_{\theta}|| \PP_r) - 2 \JSD(\PP_r, \QQ_{\theta})\right]|_{\theta = \theta_{0}}
 	\end{align*}
 By definition of the Kullback Leibler divergence, Jensen Shannon distance and from $\Aone$
 	\begin{align*}
 	& \KL(\QQ_{\theta} || \PP_r) - 2 \JSD(\PP_r, \QQ_{\theta}) \\
 	&= \KL(\QQ_{\theta} || \PP_r) - \KL(\PP_r || \QQ_{m}) - \KL(\QQ_{\theta} || \QQ_m)\\
 	&= 2 \KL(\QQ_{m} || \PP_r)
 	\end{align*}
 	Therefore the generator is trained by effectively optimizing the reverse KL between the mixture $\QQ_m$ and the real distribution $\PP_r$.
 	Hence, using that $\nabla_{\theta} q_{m}(x) = \nabla_{\theta} q_{\theta}(x)/2$
 	\begin{align*}
 	&\nabla_{\theta} 2 \KL(\QQ_{m} || \PP_r) 
 	\\& = \mathbb{E}_{\QQ_{\theta}} \left[\nabla_{\theta} \log(q_{\theta}) +
 	\log \left(\frac{ q_{m}(x)}{p_{r}(x)} \right) \nabla_{\theta} \log(q_{\theta})\right].
 	\end{align*}
 	
 	 	From \cite{arjovsky2017towards}, we know that
 	\begin{align*}
 	&\mathbb{E}_{ z \sim p(z) } [ \nabla_{\theta} \log\left(1- D^{*}( g_{\theta}(z))\right)|_{\theta = \theta_{0}} ] \\
 	&= 2 \left.\nabla_{\theta} \JSD(\QQ_{\theta}, \PP_r)\right|_{\theta = \theta_{0}}.
 	\end{align*}
 	Since
 	\[
 	\KL(\QQ_{\theta} || \PP_r) - 2 \JSD(\PP_r, \QQ_{\theta}) = 2 \KL(\QQ_{m} || \PP_r),
 	\]
 	we deduce from Proposition \ref{prp:negative_logD}
 	\begin{align*}
 	& 2 \nabla_{\theta} \JSD(\PP_r, \QQ_{\theta}) \\
 	&= \int \nabla_{\theta} q_{\theta}(x) \log \left( \frac{q_{\theta}(x) }{q_m(x)}\right)
 	dx.
 	\end{align*}
 \end{proof}

\textbf{Acknowledgement} 
We would like to thank 
Joey Bose for his technical support, 
Gavin Ding for his discussion,
and Hamidreza Saghir and Cathal Smyth for their edits and corrections.

\clearpage

\nocite{langley00}

\bibliography{paper}
\bibliographystyle{icml2017}

\end{document}